\documentclass[10pt,a4paper]{extarticle}
\usepackage{graphicx} %

\usepackage[round]{natbib}
\renewcommand{\cite}{\citep}

\usepackage{fullpage}
\usepackage{libertine}
\usepackage{authblk}

\usepackage{amsmath}
\usepackage{amsthm}
\usepackage{amssymb}
\usepackage{color}
\usepackage{bm}
\usepackage{nicefrac}
\usepackage{xcolor}
\usepackage[bb=dsserif]{mathalpha}
\usepackage[group-separator={,},group-minimum-digits={3}]{siunitx}

\usepackage[pdftex,bookmarksnumbered,bookmarksopen,colorlinks,citecolor={blue!70!black},linkcolor={blue!70!black},urlcolor=blue]{hyperref}
\usepackage[capitalize]{cleveref}

\DeclareMathOperator{\E}{\mathbb{E}}
\newcommand{\sD}{\mathbb{D}}
\newcommand{\sR}{\mathbb{R}}

\newcommand{\define}{~\smash{\triangleq}~}

\definecolor{seabornblue}{HTML}{1f77b4}
\definecolor{seabornorange}{HTML}{ff7f0e}
\definecolor{seaborngreen}{HTML}{2ca02c}
\definecolor{seabornred}{HTML}{d62728}

\definecolor{seqA}{HTML}{9ecae1}  %
\definecolor{seqB}{HTML}{6baed6}  %
\definecolor{seqC}{HTML}{2171b5}  %
\definecolor{seqD}{HTML}{08519c}  %

\newtheorem{definition}{Definition}[section]
\newtheorem{proposition}{Proposition}[section]

\newcommand{\tpr}{\mathrm{TPR}}
\newcommand{\fpr}{\mathrm{FPR}}

\title{\vspace{-2em} %
On Choosing the $\mu$ Parameter in  Gaussian Differential Privacy}
\author[1]{Bogdan Kulynych}
\author[2]{Antti Honkela}
\affil[1]{{\footnotesize Biomedical Data Science Center, Lausanne University Hospital}}
\affil[2]{{\footnotesize University of Helsinki}}
\date{}

\usepackage{pgfplots}
\pgfplotsset{compat=1.18}
\usepackage{subcaption}
\usepackage{booktabs}
\usetikzlibrary{arrows.meta}

\begin{document}

\maketitle
\begin{abstract}
\noindent
Recent work argues for using Gaussian differential privacy (GDP) to report the privacy guarantees in privacy-preserving machine learning. We provide principled mappings from pure-DP $\varepsilon$ to GDP $\mu$ by matching the worst-case success of a strong-adversary membership inference attack in terms of three metrics: multiplicative advantage at fixed FPR, precision at fixed recall, and the standard privacy profile. We tabulate $\mu$ values across a useful range of parameters and recommend $\mu \approx \varepsilon/5$ as a conservative general-purpose conversion.
\end{abstract}

\section{Introduction}
\label{sec:intro}

Over the past decade, a number of differential privacy (DP) variants have emerged, with different ways of measuring the similarity of mechanism output distributions on adjacent datasets~\cite{desfontaines2020sok}.
Yet, the final guarantees are almost always reported in terms of $(\varepsilon, \delta)$ of approximate DP.
Recently, a subset of authors~\cite{gomez2026gaussian} noted that approximate DP guarantees provide a very inaccurate representation of privacy guarantees of many important algorithms, especially in machine learning, and propose using Gaussian DP (GDP) \citep{dong2019gaussian} for reporting and for defining the privacy budget instead.

In this note, we consider the problem of using $\mu$-GDP for defining the privacy budget in a general-purpose way, and ask the question of what is a sensible $\mu$ value in different scenarios.

\begin{figure}[p]
    \centering
    \begin{subfigure}[b]{0.3\linewidth}
        \centering
        \resizebox{!}{0.79\linewidth}{%
        \begin{tikzpicture}
            \def\mufig{1}
            \def\alphaarrow{0.001}
            \def\tprarrow{0.018298}%
            \pgfmathsetmacro{\ymaxval}{\tprarrow*1.15}
            \pgfmathsetmacro{\nodemidy}{(\tprarrow+\alphaarrow)/2}
            \pgfmathsetmacro{\nodexpos}{1.1* \alphaarrow }
            \begin{axis}[
                width=5cm, height=5cm,
                xlabel={FPR},
                ylabel={TPR},
                xmode=log,
                xmin=1e-5, xmax=1e-2, ymin=0, ymax=\ymaxval,
                axis line style={draw opacity=0.5},
                enlargelimits=false,
                font={\sffamily},
                xtick={1e-5,1e-4,1e-3,1e-2},
                ytick={0.01, 0.02},
                yticklabels={0.01, 0.02},
                scaled y ticks=false,
                grid=major,
                grid style={gray!25, thin},
                legend style={draw=none, fill=none, font={\sffamily\scriptsize},
                              at={(0.03,0.97)}, anchor=north west,
                              row sep=-2pt, inner sep=2pt},
                legend cell align=left,
            ]
            \addplot[color=seabornorange, thick] table[x=fpr, y=tpr] {data/plots/fig1a_tradeoff_mu1.dat};
            \addlegendentry{GDP}
            \addplot[color=black!55, dashed, thick,
                     domain=1e-5:0.1, samples=2]
                {x};
            \addlegendentry{TPR = FPR}

            \draw[{Stealth[length=4pt]}-{Stealth[length=4pt]},
                  gray!75, thick]
                (axis cs:\alphaarrow, \alphaarrow) -- (axis cs:\alphaarrow, \tprarrow);
            \node[anchor=west, font=\sffamily\footnotesize, align=left]
                at (axis cs:\nodexpos, \nodemidy)
                {$\dfrac{\tpr}{\fpr}$};
            \end{axis}
        \end{tikzpicture}}
        \caption{Log. mult. advantage for $\mu = 1$}
        \label{fig:tradeoff}
    \end{subfigure}%
    \hfill
    \begin{subfigure}[b]{0.3\linewidth}
        \centering
        \resizebox{!}{0.81\linewidth}{%
        \begin{tikzpicture}
            \begin{axis}[
                width=5cm, height=5cm,
                xlabel={$\varepsilon$}, ylabel={$\mu$},
                xmin=0, xmax=4, ymin=0, ymax=4,
                axis line style={draw opacity=0.5},
                enlargelimits=false,
                font={\sffamily},
                xtick={0,1,2,3,4}, ytick={0,1,2,3,4},
                grid=both,
                grid style={gray!25, thin},
                legend style={draw=none, fill=none, font={\sffamily\footnotesize},
                              at={(0.03,0.97)}, anchor=north west,
                              row sep=-2pt, inner sep=2pt},
                legend cell align=left,
            ]
            \addplot[color=gray!50, thick, dashed, domain=0:5, samples=2]
                {1/3 * x};
            \addlegendentry{$\nicefrac{\varepsilon}{3}$}
            \addplot[color=gray!30, thick, dashed, domain=0:5, samples=2]
                {1/5 * x};
            \addlegendentry{$\nicefrac{\varepsilon}{5}$}

            \addplot[color=seqA, thick] table[x=eps, y=cA] {data/plots/fig1b_mu_vs_eps_multadv.dat};
            \addlegendentry{$\alpha = 10^{-5}$}
            \addplot[color=seqB, thick] table[x=eps, y=cB] {data/plots/fig1b_mu_vs_eps_multadv.dat};
            \addlegendentry{$\alpha = 10^{-4}$}
            \addplot[color=seqC, thick] table[x=eps, y=cC] {data/plots/fig1b_mu_vs_eps_multadv.dat};
            \addlegendentry{$\alpha = 10^{-3}$}
            \addplot[color=seqD, thick] table[x=eps, y=cD] {data/plots/fig1b_mu_vs_eps_multadv.dat};
            \addlegendentry{$\alpha = 10^{-2}$}

            \end{axis}
        \end{tikzpicture}}
        \caption{Mapping via mult. adv. at FPR $\alpha$}
        \label{fig:mu-calibration}
    \end{subfigure}%
    \hfill%
    \begin{subfigure}[b]{0.35\linewidth}
        \centering
        {\sffamily\footnotesize
        \setlength{\tabcolsep}{3pt}
        \renewcommand{\arraystretch}{1.18}
        \begin{tabular}{@{}rcccccc@{}}
    & \multicolumn{6}{c}{$\varepsilon$} \\
    \cmidrule(l){2-7}
    $\alpha$ & 0.1 & 0.2 & 0.5 & 1.0 & 1.5 & 2.0 \\
    \cmidrule(l){1-1} \cmidrule(l){2-7}
    $10^{-2}$ & 0.038 & 0.076 & 0.19 & 0.4 & 0.63 & 0.88 \\
    $10^{-3}$ & 0.03 & 0.06 & 0.15 & 0.31 & 0.48 & 0.65 \\
    $10^{-4}$ & 0.025 & 0.051 & 0.13 & 0.26 & 0.4 & 0.54 \\
    $10^{-5}$ & 0.022 & 0.045 & 0.11 & 0.23 & 0.35 & 0.47 \\
\end{tabular}
}
        \vspace{1.5em}
        \caption{Mapping via mult. adv. at FPR $\alpha$}
        \label{tab:mu-values}
    \end{subfigure}
    \caption{Mapping between pure DP $\varepsilon$ and GDP $\mu$ via
    multiplicative-advantage semantics at FPR level $\alpha$.
    \textbf{\subref{fig:tradeoff}}:~Illustration of multiplicative advantage for GDP with $\mu = 1$ at $\alpha = 10^{-3}$ (gray arrow).
    \textbf{\subref{fig:mu-calibration}}, \textbf{\subref{tab:mu-values}}:~GDP $\mu$ values calibrated to multiplicative advantage for different FPR levels
    $\alpha$.}
    \label{fig:dp-gdp-mapping}
\end{figure}

\section{Setup}
\label{sec:setup}

Suppose a curator releases the output $\theta \sim M(S)$, e.g., a statistical computation, machine learning model, or a synthetic dataset, with $\theta \in \Theta$, of a randomized
\emph{mechanism} applied to a sensitive dataset $S \in 2^\sD$ over some data space $\sD$.

\paragraph{Strong-adversary membership inference.} We follow the standard hypothesis-testing approach to characterizing the privacy risk carried by the release~\cite{wasserman2010statistical,kairouz2015composition,dong2019gaussian}. Specifically, we use a security-first interpretation of the hypothesis-testing setup, in which we formalize the privacy risk by modeling a so-called \emph{strong adversary}~\cite[see, e.g.,][]{cuff2016differential, kulynych2024attack}. The adversary has knowledge of the mechanism $M(\cdot)$, a partial dataset $\bar S \in \sD^{n - 1}$, and aims to infer whether a given target record $z \in \sD$ was part of the input dataset that produced an observed output of the mechanism $\theta$. To mount this \emph{membership inference attack} (MIA), the adversary runs a test $\phi: \Theta \rightarrow [0, 1]$, where $\phi(\theta) = 1$ indicates the guess that the record was used as part of training, and $\phi(\theta) = 0$ indicates otherwise.

\paragraph{Attack success.}
We can characterize the adversary's success using the \emph{true-positive rate} (TPR), or \emph{recall}, and \emph{false-positive rate} (FPR) of the MIA:
\begin{equation}
    \tpr_\phi \define \E_{\theta \sim M(\bar S \cup \{z\})}[\phi], \quad
    \fpr_\phi \define \E_{\theta \sim M(\bar S)}[\phi].
\end{equation}
In the context of testing, the TPR corresponds to the test's \emph{power} or inverse of the Type II error, and FPR corresponds to the test's Type I error. To represent the attack success number compactly, a common approach is to use either additive advantage $\tpr_\phi - \fpr_\phi$~\cite[see, e.g.,][]{yeom2018privacy}, or \emph{multiplicative advantage} $\nicefrac{\tpr_\phi}{\fpr_\phi}$. In this note, we focus on the latter, as it is more robust in the small-FPR regime. In a slight abuse of notation, we drop the subscript where appropriate.

\paragraph{Differential privacy variants.} Within this view, the classical notion of differential privacy (DP)~\cite{dwork2006calibrating,dwork2014algorithmic}, also called pure DP, ensures that the multiplicative advantage is always bounded:
\begin{definition}
    A mechanism $M(\cdot)$ satisfies $\varepsilon$-DP, where $\varepsilon \in \sR$, if for any test function $\phi: \Theta \rightarrow [0, 1]$, we have:
    \begin{equation}
        e^{-\varepsilon} \leq \frac{\tpr_\phi}{\fpr_\phi} \leq e^\varepsilon.
    \end{equation}
\end{definition}
Another common variant is approximate differential privacy (ADP), which relaxes the bounds:
\begin{definition}
    A mechanism $M(\cdot)$ satisfies $(\varepsilon, \delta)$-DP or $(\varepsilon, \delta)$-ADP, where $\varepsilon \ge 0$, $\delta \in [0, 1]$, if for any test function $\phi: \Theta \rightarrow [0, 1]$, we have:
    \begin{equation}
        \tpr_\phi \leq e^\varepsilon \fpr_\phi + \delta
        \quad\text{and}\quad
        \fpr_\phi \leq e^\varepsilon \tpr_\phi + \delta.
    \end{equation}
\end{definition}
Any ADP mechanism satisfies a potentially infinite collection of $(\varepsilon, \delta)$-DP guarantees, characterized by the \emph{privacy profile} functions $\varepsilon: [0, 1] \to \sR$ that associates any given $\delta$ to $\varepsilon$ such that the mechanism satisfies $(\varepsilon, \delta)$-DP~\cite{balle2018privacy}, and the analogous $\delta: \sR \to [0, 1]$.

Another approach called $f$-differential privacy ($f$-DP)~\cite{dong2019gaussian} instead parameterizes the attack success using bounds on the set of achievable TPR-FPR pairs, or, in other words, bounds on the receiver-operating characteristic (ROC) curve of the worst-case MIA:
\begin{definition}
    A mechanism $M(\cdot)$ satisfies $f$-DP if for any test function $\phi: \Theta \rightarrow [0, 1]$, we have:
    \begin{equation}
        \tpr_\phi \leq 1 - f(\fpr_\phi), \quad\text{and}\quad \fpr_\phi \leq 1 - f(\tpr_\phi),
    \end{equation}
    where $f: [0, 1] \rightarrow [0, 1]$ is a given convex, non-increasing function called the \emph{trade-off curve}.
\end{definition}
We can losslessly convert between a trade-off curve $f$ and a privacy profile $\varepsilon(\delta)$. Thus, both contain equivalent information about the privacy guarantees of the mechanism, albeit with different immediate interpretations. Both representations are also potentially information-theoretically complete in the sense that they could contain all the information there is to characterize the success rate of the worst-case MIA~\cite{gomez2026gaussian}.

Gaussian differential privacy (GDP)~\cite{dong2019gaussian} is a guarantee that mounting a MIA against the mechanism outputs is as hard as distinguishing whether a single sample came from a standard normal distribution $\mathcal{N}(0, 1)$ or from its version $\mathcal{N}(\mu, 1)$ shifted by $\mu$ units:
\begin{definition}
    A mechanism $M(\cdot)$ satisfies $\mu$-GDP if it satisfies $f_\mu$-DP with $f_\mu$ given by:
    \begin{equation}
        f_\mu(\alpha) = \Phi(\Phi^{-1}(1 - \alpha) - \mu),
    \end{equation}
    where $\Phi(x)$ is the cumulative distribution function of the standard normal distribution.
\end{definition}
Many practical algorithms used in privacy-preserving machine learning and statistics are well-characterized by GDP~\cite{gomez2026gaussian}.

\section{Mapping Pure DP $\varepsilon$ to Gaussian DP $\mu$}
Our goal is to find a general-purpose correspondence between $\varepsilon$-DP, for which standard guidelines and recommendations already exist, and GDP parameter values $\mu$. Unfortunately, there is no straightforward way to do so, as the mapping could be done in a multitude of ways. We present several ways to do so that are consistent with prior approaches to interpret the semantics of DP protection.

\paragraph{General approach.} We define several notions of \emph{privacy risk}. Then, for each, we establish a correspondence between the DP and GDP parameters as follows. First, we fix a value of $\varepsilon \geq 0$. We find the $\mu$ parameter that attains the same value of risk. This way, our mapping ensures that each GDP $\mu$ parameter attains the same semantics as a pure-DP mechanism with the corresponding $\varepsilon$.

\paragraph{Multiplicative advantage.}
First, we introduce a notion of risk based on the multiplicative advantage of MIAs:
\begin{definition}
    We define the logarithmic multiplicative advantage $\varepsilon$ at FPR level $\alpha \in (0, 1)$ as follows:
    \begin{equation}
        \varepsilon \define \sup_{\phi:\, \fpr_\phi \leq \alpha}\,
            \max\!\left(\log\frac{\tpr_\phi}{\fpr_\phi},\, \log\frac{\fpr_\phi}{\tpr_\phi}\right).
    \end{equation}
\end{definition}
For any pure-DP mechanism, the multiplicative advantage at any FPR level $\alpha \in (0, 1)$ is at most $\varepsilon$ by definition.
Next, we provide closed-form solutions for the $\mu$ parameter values corresponding to a given multiplicative advantage and FPR level.
\begin{proposition}\label{prop:mu-calibration}
    For $\alpha \in (0,1)$ and $\varepsilon \geq 0$, $\mu$-GDP ensures the multiplicative advantage bound of $e^\varepsilon$ at FPR level $\alpha$ if:
    \begin{equation}\label{eq:mu-calibration}
        \mu = \begin{cases}
            \Phi^{-1}(e^\varepsilon \alpha) - \Phi^{-1}(\alpha)
                & \alpha < \tfrac{1}{e^\varepsilon+1}, \\[4pt]
            \Phi^{-1}(1-\alpha) - \Phi^{-1}\!\bigl(e^{-\varepsilon}(1-\alpha)\bigr)
                & \alpha \geq \tfrac{1}{e^\varepsilon+1}.
        \end{cases}
    \end{equation}
\end{proposition}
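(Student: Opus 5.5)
The plan is to read ``multiplicative advantage at FPR level $\alpha$'' as a constraint on the worst-case attack \emph{at} FPR $\alpha$. Under pure $\varepsilon$-DP, a test with $\fpr_\phi=\alpha$ can reach TPR as large as
\begin{equation*}
\beta_\varepsilon(\alpha) \define \min\bigl(e^\varepsilon\alpha,\; 1-e^{-\varepsilon}(1-\alpha)\bigr).
\end{equation*}
The two terms come from two applications of the DP definition. Applying it to $\phi$ gives $\tpr\le e^\varepsilon \fpr$. Applying it to the complementary test $1-\phi$ gives $\fpr_{1-\phi}\le e^{\varepsilon}\tpr_{1-\phi}$, that is, $1-\alpha\le e^\varepsilon(1-\tpr_\phi)$. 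I will take ``$\mu$-GDP ensures the bound'' to mean that the worst-case GDP attack at FPR $\alpha$ reaches exactly $\beta_\varepsilon(\alpha)$ and no more. The calibration then amounts to solving one scalar equation in $\mu$.

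\textbf{Step 1 (GDP side).} Using $1-\Phi(x)=\Phi(-x)$ and $\Phi^{-1}(1-\alpha)=-\Phi^{-1}(\alpha)$, the largest TPR at FPR $\alpha$ under $f_\mu$-DP is
\begin{equation*}
1-f_\mu(\alpha)=\Phi\bigl(\Phi^{-1}(\alpha)+\mu\bigr) \define g_\mu(\alpha).
\end{equation*}
This is strictly increasing in $\mu$, so a larger $\mu$ means a weaker guarantee. Hence the matching $\mu$ is the unique solution of $g_\mu(\alpha)=\beta_\varepsilon(\alpha)$, and any smaller $\mu$ also satisfies the bound. I will also check the symmetric constraint $\fpr\le 1-f_\mu(\tpr)$. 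Since $f_\mu$ is symmetric ($f_\mu=f_\mu^{-1}$), this constraint produces the same region, so no further condition arises.

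\textbf{Step 2 (which branch binds).} Next I determine where the two terms of $\beta_\varepsilon$ cross. Setting $e^\varepsilon\alpha = 1-e^{-\varepsilon}(1-\alpha)$ gives $\alpha(e^\varepsilon-e^{-\varepsilon})=1-e^{-\varepsilon}$, so $\alpha = 1/(e^\varepsilon+1)$. For smaller $\alpha$ the multiplicative term $e^\varepsilon\alpha$ is the minimum; for larger $\alpha$ the complementary term is.

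\textbf{Step 3 (solve).} In the first regime, $\Phi(\Phi^{-1}(\alpha)+\mu)=e^\varepsilon\alpha$ gives $\mu=\Phi^{-1}(e^\varepsilon\alpha)-\Phi^{-1}(\alpha)$. This is well defined because $e^\varepsilon\alpha<1$ there. In the second regime, $\Phi(\Phi^{-1}(\alpha)+\mu)=1-e^{-\varepsilon}(1-\alpha)$. Applying $\Phi^{-1}$ and the symmetry $\Phi^{-1}(1-x)=-\Phi^{-1}(x)$ gives $\mu=-\Phi^{-1}(e^{-\varepsilon}(1-\alpha))-\Phi^{-1}(\alpha)$, which is the stated second case.

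The main obstacle is not the algebra but pinning down the semantics so that the proposition is actually true as stated. Read literally, $\sup_{\phi:\fpr_\phi\le\alpha}\tpr/\fpr$ for GDP is attained as $\fpr\to 0$. There, $g_\mu(a)/a\to\infty$ for every $\mu>0$, because $g_\mu$ is concave with $g_\mu(0)=0$, so $g_\mu(a)/a$ is decreasing in $a$. So I must commit to the ``at FPR exactly $\alpha$'' reading. I must also justify that the second regime matches the pure-DP frontier through the complementary-test ratio $\fpr_{1-\phi}/\tpr_{1-\phi}$, which is what the $\max(\cdot,\cdot)$ in the definition encodes.
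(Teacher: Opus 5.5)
Your proposal is correct and is essentially the paper's proof: both match the $\mu$-GDP trade-off curve to the piecewise-linear pure-DP curve at the single point $\alpha$, split at the breakpoint $1/(e^\varepsilon+1)$, and invert $\Phi$ in each branch. You equate the TPR bounds $1-f$ rather than $f$ itself, which is the same equation, and your derivation of the DP curve via the complementary test and your monotonicity-in-$\mu$ remark are small additions the paper omits. Your observation that the literal $\sup_{\phi:\,\fpr_\phi\le\alpha}$ reading gives an infinite advantage for every $\mu>0$ is right, and the paper glosses over it; note, though, that concavity of $g_\mu$ only shows the supremum is approached as $\fpr\to 0$, while the divergence itself comes from $\Phi(x+\mu)/\Phi(x)\to\infty$ as $x\to-\infty$.
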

The proof is in the Appendix.
In \cref{fig:dp-gdp-mapping}, we show the results of this mapping for a useful range of $\varepsilon$ and $\alpha$ values, and defer to \cref{prop:mu-calibration} for mapping the parameters outside of this range.

\paragraph{Precision at fixed recall.}
As another standard metric to evaluate MIAs, we consider precision:
\begin{definition}
    We define precision as follows:
    \begin{equation}
        \mathtt{precision}_{\phi} \define \frac{\tpr_\phi}{\tpr_\phi + \fpr_\phi}.
    \end{equation}
\end{definition}
For any pure-DP mechanism, we have $\mathtt{precision}_{\phi} \leq e^\varepsilon / (e^\varepsilon + 1)$ for any test $\phi$.
Next, we provide a closed-form solution for GDP:
\begin{proposition}\label{prop:mu-recall}
    For recall level $\bar\beta \in (0, 1)$ and $\varepsilon \geq 0$, $\mu$-GDP ensures the same
    precision as $\varepsilon$-DP if:
    \begin{equation}\label{eq:mu-recall}
        \mu = \Phi^{-1}(\bar\beta) - \Phi^{-1}\!\bigl(e^{-\varepsilon}\bar\beta\bigr).
    \end{equation}
\end{proposition}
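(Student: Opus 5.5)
Fix recall $\bar\beta$, so we only consider tests with $\tpr_\phi = \bar\beta$. Precision $\bar\beta/(\bar\beta + \fpr_\phi)$ is decreasing in $\fpr_\phi$. The worst-case precision at recall $\bar\beta$ is therefore attained by the test with the smallest FPR compatible with that TPR. So the plan is to compute this minimal FPR under $\varepsilon$-DP and under $\mu$-GDP, set the two equal, and solve for $\mu$.

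\textbf{Step 1: the $\varepsilon$-DP benchmark.} The constraint $\tpr_\phi \le e^\varepsilon \fpr_\phi$ gives $\fpr_\phi \ge e^{-\varepsilon}\bar\beta$ whenever $\tpr_\phi = \bar\beta$. Hence
\[
\mathtt{precision}_\phi \le \frac{\bar\beta}{\bar\beta + e^{-\varepsilon}\bar\beta} = \frac{e^\varepsilon}{e^\varepsilon+1}.
\]
This bound is tight for the randomized-response mechanism at any recall level. The remaining constraints $\fpr \le e^\varepsilon \tpr$ and the two conditions with $1-\cdot$ do not bind in this direction, since we are minimizing FPR.

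\textbf{Step 2: the $\mu$-GDP worst case.} From $\tpr_\phi \le 1 - f_\mu(\fpr_\phi)$, and since $1-f_\mu$ is increasing, the smallest feasible FPR at TPR $\bar\beta$ is the $\alpha$ solving $1 - f_\mu(\alpha) = \bar\beta$. Using $1-\Phi(x) = \Phi(-x)$, this equation reads $\Phi(\mu - \Phi^{-1}(1-\alpha)) = \bar\beta$, that is $\mu + \Phi^{-1}(\alpha) = \Phi^{-1}(\bar\beta)$. So
\[
\alpha^\star = \Phi\bigl(\Phi^{-1}(\bar\beta) - \mu\bigr).
\]
This value is attained by the Neyman--Pearson threshold test for $\mathcal{N}(0,1)$ versus $\mathcal{N}(\mu,1)$. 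I would also check that the symmetric constraint $\fpr \le 1 - f_\mu(\tpr)$ is not the binding one. This follows from the symmetry of $f_\mu$ together with $1-f_\mu(x) \ge x$: that constraint only bounds FPR from above.

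\textbf{Step 3: equate and solve.} Setting $\alpha^\star = e^{-\varepsilon}\bar\beta$ makes the worst-case precision at recall $\bar\beta$ equal to $e^\varepsilon/(e^\varepsilon+1)$, matching Step 1. Applying $\Phi^{-1}$ gives $\Phi^{-1}(\bar\beta) - \mu = \Phi^{-1}(e^{-\varepsilon}\bar\beta)$, which is exactly \eqref{eq:mu-recall}. We have $\mu \ge 0$ because $e^{-\varepsilon}\bar\beta \le \bar\beta$. Monotonicity of $\alpha^\star$ in $\mu$ shows that any smaller $\mu$ also ensures the bound, so the stated $\mu$ is the largest such value.

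\textbf{Main obstacle.} The only delicate point is justifying, in Step 2, that the infimum of FPR at fixed TPR is exactly given by inverting the trade-off curve and is achieved. This holds because $f_\mu$ is continuous and strictly decreasing on $(0,1)$. There are no case splits here, unlike \cref{prop:mu-calibration}, because fixing recall rather than FPR means the reverse inequality never binds.
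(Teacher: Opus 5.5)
Your main line is correct and is the paper's argument. Both invert the $\mu$-GDP trade-off curve to get $\fpr_\phi \ge \Phi(\Phi^{-1}(\tpr_\phi)-\mu)$ and then solve $\bar\beta/\Phi(\Phi^{-1}(\bar\beta)-\mu) = e^\varepsilon$.

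The one structural difference is the set of tests covered. The paper bounds precision for every test with $\tpr_\phi \ge \bar\beta$. That needs the extra fact that $t \mapsto t/\Phi(\Phi^{-1}(t)-\mu)$ is non-increasing, which holds because $t \mapsto \Phi(\Phi^{-1}(t)-\mu)$ is convex and vanishes at $0$. You fix $\tpr_\phi = \bar\beta$ exactly. That suffices for the literal statement, but it does not give the uniform guarantee over all operating points with at least that recall. In the other direction, your attainment and maximality remarks are correct additions the paper leaves implicit: the Gaussian Neyman--Pearson test attains $\alpha^\star$, and any larger $\mu$ breaks the bound.

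Two of your side claims about the $\varepsilon$-DP benchmark are false, though. Applying pure DP to the complementary test $1-\phi$ gives $1-\fpr_\phi \le e^\varepsilon(1-\tpr_\phi)$, i.e.\ $\fpr_\phi \ge 1-e^\varepsilon(1-\bar\beta)$. This is a \emph{lower} bound on FPR, and it exceeds $e^{-\varepsilon}\bar\beta$ exactly when $\bar\beta > e^\varepsilon/(e^\varepsilon+1)$.

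So randomized response does not attain precision $e^\varepsilon/(e^\varepsilon+1)$ at every recall level. For example, $\varepsilon=1$ and $\bar\beta=0.9$ give a worst-case precision of about $0.55$, not $0.73$. Your explanation for the absence of a case split is also wrong. You say the reverse inequality never binds once recall is fixed, but it does: the DP kink at $\tpr = e^\varepsilon/(e^\varepsilon+1)$ that produces the split in \cref{prop:mu-calibration} reappears here.

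The real reason no case split is needed is that the proposition matches the universal bound $e^\varepsilon/(e^\varepsilon+1)$, as the paper does, rather than the exact $\varepsilon$-DP worst case at recall $\bar\beta$. Under that reading your Steps 2--3 prove the statement, so drop or correct the tightness and no-case-split remarks.
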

The proof is in the Appendix.
In \cref{fig:dp-gdp-mapping-recall} we show the results of this mapping for a useful range of $\varepsilon$ and $\bar\beta$ values, and defer to \cref{prop:mu-recall} for mapping the parameters outside of this range.

\begin{figure}[p]
    \centering
    \begin{subfigure}[b]{0.3\linewidth}
        \centering
        \resizebox{!}{0.79\linewidth}{%
        \begin{tikzpicture}
            \def\mufig{1}
            \begin{axis}[
                width=5cm, height=5cm,
                xlabel={TPR},
                ylabel={Precision},
                xmin=0, xmax=1, ymin=0.4, ymax=1.05,
                axis line style={draw opacity=0.5},
                enlargelimits=false,
                font={\sffamily},
                xtick={0, 0.5, 1}, ytick={0.5, 0.75, 1},
                grid=major,
                grid style={gray!25, thin},
                legend style={draw=none, fill=none, font={\sffamily\scriptsize},
                              at={(0.97,0.03)}, anchor=south east,
                              row sep=-2pt, inner sep=2pt},
                legend cell align=left,
            ]
            \addplot[color=seabornorange, thick] table[x=recall, y=precision] {data/plots/fig2a_precision_mu1.dat};
            \addlegendentry{GDP}
            \addplot[color=black!55, dashed, thick,
                     domain=0:1, samples=2]
                {0.5};
            \addlegendentry{$\mathrm{Precision} = \nicefrac{1}{2}$}
            \end{axis}
        \end{tikzpicture}}
        \caption{Precision vs.\ recall $\hat\beta$ at $\mu = 1$}
        \label{fig:tradeoff-recall}
    \end{subfigure}%
    \hfill
    \begin{subfigure}[b]{0.3\linewidth}
        \centering
        \resizebox{!}{0.81\linewidth}{%
        \begin{tikzpicture}
            \begin{axis}[
                width=5cm, height=5cm,
                xlabel={$\varepsilon$}, ylabel={$\mu$},
                xmin=0, xmax=4, ymin=0, ymax=4,
                axis line style={draw opacity=0.5},
                enlargelimits=false,
                font={\sffamily},
                xtick={0,1,2,3,4}, ytick={0,1,2,3,4},
                grid=both,
                grid style={gray!25, thin},
                legend style={draw=none, fill=none, font={\sffamily\footnotesize},
                              at={(0.03,0.97)}, anchor=north west,
                              row sep=-2pt, inner sep=2pt},
                legend cell align=left,
            ]
            \addplot[color=gray!50, thick, dashed, domain=0:5, samples=2]
                {1/3 * x};
            \addlegendentry{$\nicefrac{\varepsilon}{3}$}
            \addplot[color=gray!30, thick, dashed, domain=0:5, samples=2]
                {1/5 * x};
            \addlegendentry{$\nicefrac{\varepsilon}{5}$}

            \addplot[color=seqA, thick] table[x=eps, y=cA] {data/plots/fig2b_mu_vs_eps_recall.dat};
            \addlegendentry{$\bar\beta = 10^{-5}$}
            \addplot[color=seqB, thick] table[x=eps, y=cB] {data/plots/fig2b_mu_vs_eps_recall.dat};
            \addlegendentry{$\bar\beta = 10^{-4}$}
            \addplot[color=seqC, thick] table[x=eps, y=cC] {data/plots/fig2b_mu_vs_eps_recall.dat};
            \addlegendentry{$\bar\beta = 10^{-3}$}
            \addplot[color=seqD, thick] table[x=eps, y=cD] {data/plots/fig2b_mu_vs_eps_recall.dat};
            \addlegendentry{$\bar\beta = 10^{-2}$}

            \end{axis}
        \end{tikzpicture}}
        \caption{Mapping, via prec. at recall $\hat\beta$}
        \label{fig:mu-calibration-recall}
    \end{subfigure}%
    \hfill%
    \begin{subfigure}[b]{0.35\linewidth}
        \centering
        {\sffamily\footnotesize
        \setlength{\tabcolsep}{3pt}
        \renewcommand{\arraystretch}{1.18}
        \begin{tabular}{@{}rcccccc@{}}
    & \multicolumn{6}{c}{$\varepsilon$} \\
    \cmidrule(l){2-7}
    $\bar\beta$ & 0.1 & 0.2 & 0.5 & 1.0 & 1.5 & 2.0 \\
    \cmidrule(l){1-1} \cmidrule(l){2-7}
    $10^{-2}$ & 0.037 & 0.074 & 0.18 & 0.35 & 0.52 & 0.67 \\
    $10^{-3}$ & 0.03 & 0.059 & 0.15 & 0.29 & 0.42 & 0.55 \\
    $10^{-4}$ & 0.025 & 0.05 & 0.12 & 0.25 & 0.36 & 0.48 \\
    $10^{-5}$ & 0.022 & 0.044 & 0.11 & 0.22 & 0.32 & 0.43 \\
\end{tabular}
}
        \vspace{1.5em}
        \caption{Mapping via prec. at recall $\hat\beta$}
        \label{tab:mu-values-recall}
    \end{subfigure}
    \caption{Mapping between pure DP $\varepsilon$ and GDP $\mu$ via
    precision at recall level $\tpr = \bar\beta$.
    \textbf{\subref{fig:tradeoff-recall}}:~Precision under $\mu = 1$ as a function of recall $\bar\beta$ (gray arrow shows the trivial baseline $\nicefrac{1}{2}$ at $\bar\beta = \nicefrac{1}{2}$).
    \textbf{\subref{fig:mu-calibration-recall}}, \textbf{\subref{tab:mu-values-recall}}:~GDP $\mu$ values calibrated to precision at recall for different recall levels $\bar\beta$.}
    \label{fig:dp-gdp-mapping-recall}
\end{figure}

\paragraph{Approximate DP.}
The most common way to summarize ADP guarantees is via the privacy profile $\varepsilon(\delta)$. In the language of MIA error rates, it is defined as follows:
\begin{definition}
    We define the privacy profile $\varepsilon: [0, 1] \to \sR$ as follows:
    \begin{equation}
        \varepsilon(\delta) \define \sup_{\phi:~\Theta \to [0, 1]} \max\left( \log \frac{\tpr_\phi - \delta}{\fpr_\phi},\;\log \frac{\fpr_\phi - \delta}{\tpr_\phi}\right).
    \end{equation}
\end{definition}
For $\mu$-GDP, the inverse profile $\delta(\varepsilon)$ admits the closed form~\cite{dong2019gaussian}:
\begin{equation}\label{eq:gdp-profile}
    \delta_\mu(\varepsilon) = \Phi\!\Bigl(\!-\frac{\varepsilon}{\mu} + \frac{\mu}{2}\Bigr) - e^\varepsilon\,\Phi\!\Bigl(\!-\frac{\varepsilon}{\mu} - \frac{\mu}{2}\Bigr).
\end{equation}
Given a target $(\varepsilon, \delta)$ pair, the corresponding $\mu$ is the unique solution of $\delta_\mu(\varepsilon) = \delta$. The map $\mu \mapsto \delta_\mu(\varepsilon)$ is monotone increasing for fixed $\varepsilon$, so we can invert it numerically using bisection.
In \cref{fig:dp-gdp-mapping-adp} we show the resulting numerical mapping for a useful range of $\varepsilon$ and $\delta$ values.

\medskip

In \cref{app:risk-vs-mu}, we also provide conversions from GDP directly to all of the notions of risk above for a selection of realistic values of $\mu$.

\begin{figure}[p]
    \centering
    \begin{subfigure}[b]{0.3\linewidth}
        \centering
        \resizebox{!}{0.79\linewidth}{%
        \begin{tikzpicture}
            \def\mufig{1}
            \begin{axis}[
                width=5cm, height=5cm,
                xlabel={$\delta$},
                ylabel={$\varepsilon$},
                xmin=-10, xmax=0, ymin=0, ymax=8,
                axis line style={draw opacity=0.5},
                enlargelimits=false,
                font={\sffamily},
                xtick={-9,-6,-3,0},
                xticklabels={$10^{-9}$,$10^{-6}$,$10^{-3}$,$1$},
                ytick={0,2,4,6,8},
                grid=major,
                grid style={gray!25, thin},
                legend style={draw=none, fill=none, font={\sffamily\scriptsize},
                              at={(0.97,0.97)}, anchor=north east,
                              row sep=-2pt, inner sep=2pt},
                legend cell align=left,
            ]
            \addplot[color=seabornorange, thick] table[x=log10delta, y=eps] {data/plots/fig3a_log10delta_mu1.dat};
            \addlegendentry{GDP}
            \end{axis}
        \end{tikzpicture}}
        \caption{Privacy profile, $\varepsilon(\delta)$ at $\mu = 1$}
        \label{fig:profile-illustration}
    \end{subfigure}%
    \hfill
    \begin{subfigure}[b]{0.3\linewidth}
        \centering
        \resizebox{!}{0.81\linewidth}{%
        \begin{tikzpicture}
            \begin{axis}[
                width=5cm, height=5cm,
                xlabel={$\varepsilon$}, ylabel={$\mu$},
                xmin=0, xmax=4, ymin=0, ymax=4,
                axis line style={draw opacity=0.5},
                enlargelimits=false,
                font={\sffamily},
                xtick={0,1,2,3,4}, ytick={0,1,2,3,4},
                grid=both,
                grid style={gray!25, thin},
                legend style={draw=none, fill=none, font={\sffamily\footnotesize},
                              at={(0.03,0.97)}, anchor=north west,
                              row sep=-2pt, inner sep=2pt},
                legend cell align=left,
            ]
            \addplot[color=gray!50, thick, dashed, domain=0:5, samples=2]
                {1/3 * x};
            \addlegendentry{$\nicefrac{\varepsilon}{3}$}
            \addplot[color=gray!30, thick, dashed, domain=0:5, samples=2]
                {1/5 * x};
            \addlegendentry{$\nicefrac{\varepsilon}{5}$}

            \addplot[color=seqA, thick] table[x=eps, y=cA] {data/plots/fig3b_mu_vs_eps_adp.dat};
            \addlegendentry{$\delta = 10^{-5}$}
            \addplot[color=seqB, thick] table[x=eps, y=cB] {data/plots/fig3b_mu_vs_eps_adp.dat};
            \addlegendentry{$\delta = 10^{-6}$}
            \addplot[color=seqC, thick] table[x=eps, y=cC] {data/plots/fig3b_mu_vs_eps_adp.dat};
            \addlegendentry{$\delta = 10^{-7}$}
            \addplot[color=seqD, thick] table[x=eps, y=cD] {data/plots/fig3b_mu_vs_eps_adp.dat};
            \addlegendentry{$\delta = 10^{-8}$}
            \end{axis}
        \end{tikzpicture}}
        \caption{Mapping, via privacy profile}
        \label{fig:mu-calibration-adp}
    \end{subfigure}%
    \hfill%
    \begin{subfigure}[b]{0.35\linewidth}
        \centering
        {\sffamily\footnotesize
        \setlength{\tabcolsep}{3pt}
        \renewcommand{\arraystretch}{1.18}
        \begin{tabular}{@{}rcccccc@{}}
    & \multicolumn{6}{c}{$\varepsilon$} \\
    \cmidrule(l){2-7}
    $\delta$ & 0.1 & 0.2 & 0.5 & 1.0 & 1.5 & 2.0 \\
    \cmidrule(l){1-1} \cmidrule(l){2-7}
    $10^{-5}$ & 0.033 & 0.061 & 0.14 & 0.27 & 0.39 & 0.5 \\
    $10^{-6}$ & 0.028 & 0.053 & 0.12 & 0.24 & 0.34 & 0.45 \\
    $10^{-7}$ & 0.024 & 0.047 & 0.11 & 0.21 & 0.31 & 0.41 \\
    $10^{-8}$ & 0.022 & 0.042 & 0.1 & 0.2 & 0.29 & 0.38 \\
\end{tabular}
}
        \vspace{1.5em}
        \caption{Mapping, via privacy profile}
        \label{tab:mu-values-adp}
    \end{subfigure}
    \caption{Mapping between pure DP $\varepsilon$ and GDP $\mu$ via the
    privacy profile of $\mu$-GDP at fixed $\delta$.
    \textbf{\subref{fig:profile-illustration}}:~Privacy profile $\varepsilon(\delta)$ for $\mu = 1$.
    \textbf{\subref{fig:mu-calibration-adp}}, \textbf{\subref{tab:mu-values-adp}}:~GDP $\mu$ values calibrated via the standard $(\varepsilon, \delta)$ derivation for GDP with different $\delta$ levels.}
    \label{fig:dp-gdp-mapping-adp}
\end{figure}

\section{Conclusion}

The $\mu$-GDP guarantees can be interpreted in many ways, depending on the relevant notion of risk and whether the focus is on high- or low- probability events. Reflecting on the values of the conversions we observe in \cref{fig:dp-gdp-mapping,fig:dp-gdp-mapping-recall,fig:dp-gdp-mapping-adp}, we can see some patterns that hold for all risk notions. In comparison with more familiar $\varepsilon$-DP guarantees, $\mu = \varepsilon / 5$ provides a conservative conversion rule for general-purpose use.
For settings where low-probability events can be ignored, a less conservative conversion such as $\mu = \varepsilon / 3$ could be applied.

\clearpage

\bibliographystyle{unsrtnat}
\bibliography{main}

\begin{thebibliography}{11}
\providecommand{\natexlab}[1]{#1}
\providecommand{\url}[1]{\texttt{#1}}
\expandafter\ifx\csname urlstyle\endcsname\relax
  \providecommand{\doi}[1]{doi: #1}\else
  \providecommand{\doi}{doi: \begingroup \urlstyle{rm}\Url}\fi

\bibitem[Desfontaines and Pej{\'o}(2020)]{desfontaines2020sok}
Damien Desfontaines and Bal{\'a}zs Pej{\'o}.
\newblock Sok: Differential privacies.
\newblock \emph{Proceedings on Privacy Enhancing Technologies}, 2020\penalty0 (2):\penalty0 288--313, 2020.

\bibitem[Gomez et~al.(2026)Gomez, Kulynych, Kaissis, Calmon, Hayes, Balle, and Honkela]{gomez2026gaussian}
Juan~Felipe Gomez, Bogdan Kulynych, Georgios Kaissis, Flavio~P Calmon, Jamie Hayes, Borja Balle, and Antti Honkela.
\newblock Gaussian dp for reporting differential privacy guarantees in machine learning.
\newblock In \emph{IEEE SatML}, 2026.

\bibitem[Dong et~al.(2022)Dong, Roth, and Su]{dong2019gaussian}
Jinshuo Dong, Aaron Roth, and Weijie~J Su.
\newblock Gaussian differential privacy.
\newblock \emph{Journal of the Royal Statistical Society Series B: Statistical Methodology}, 2022.

\bibitem[Wasserman and Zhou(2010)]{wasserman2010statistical}
Larry Wasserman and Shuheng Zhou.
\newblock A statistical framework for differential privacy.
\newblock \emph{Journal of the American Statistical Association}, 2010.

\bibitem[Kairouz et~al.(2015)Kairouz, Oh, and Viswanath]{kairouz2015composition}
Peter Kairouz, Sewoong Oh, and Pramod Viswanath.
\newblock The composition theorem for differential privacy.
\newblock In \emph{International conference on machine learning}. PMLR, 2015.

\bibitem[Cuff and Yu(2016)]{cuff2016differential}
Paul Cuff and Lanqing Yu.
\newblock Differential privacy as a mutual information constraint.
\newblock In \emph{Proceedings of the 2016 ACM SIGSAC Conference on Computer and Communications Security}, pages 43--54, 2016.

\bibitem[Kulynych et~al.(2024)Kulynych, Gomez, Kaissis, Calmon, and Troncoso]{kulynych2024attack}
Bogdan Kulynych, Juan~Felipe Gomez, Georgios Kaissis, Flavio du~Pin Calmon, and Carmela Troncoso.
\newblock Attack-aware noise calibration for differential privacy.
\newblock \emph{Advances in Neural Information Processing Systems ({NeurIPS})}, 2024.

\bibitem[Yeom et~al.(2018)Yeom, Giacomelli, Fredrikson, and Jha]{yeom2018privacy}
Samuel Yeom, Irene Giacomelli, Matt Fredrikson, and Somesh Jha.
\newblock Privacy risk in machine learning: Analyzing the connection to overfitting.
\newblock In \emph{2018 IEEE 31st Computer Security Foundations Symposium (CSF)}. IEEE, 2018.

\bibitem[Dwork et~al.(2006)Dwork, McSherry, Nissim, and Smith]{dwork2006calibrating}
Cynthia Dwork, Frank McSherry, Kobbi Nissim, and Adam Smith.
\newblock Calibrating noise to sensitivity in private data analysis.
\newblock In \emph{Proceedings of the Theory of Cryptography Conference}, 2006.

\bibitem[Dwork and Roth(2014)]{dwork2014algorithmic}
Cynthia Dwork and Aaron Roth.
\newblock The algorithmic foundations of differential privacy.
\newblock \emph{Foundations and Trends in Theoretical Computer Science}, 2014.

\bibitem[Balle et~al.(2018)Balle, Barthe, and Gaboardi]{balle2018privacy}
Borja Balle, Gilles Barthe, and Marco Gaboardi.
\newblock Privacy amplification by subsampling: Tight analyses via couplings and divergences.
\newblock \emph{Advances in Neural Information Processing Systems ({NeurIPS})}, 2018.

\end{thebibliography}
\clearpage
\appendix

\section{Proofs}
\label{app:proofs}

\begin{proof}[Proof of \cref{prop:mu-calibration}]
We analyze separately for two regimes of $\alpha$ values corresponding to the segments of the piecewise-linear trade-off curve corresponding to DP, $f_\varepsilon(\alpha) = \max\{0, 1 - e^\varepsilon \alpha, e^{\varepsilon}(1 - \alpha)\}$.

\smallskip\noindent\textit{Case 1: $\alpha < 1/(e^\varepsilon+1)$.}
In this case, $f_\varepsilon(\alpha) = 1 - e^\varepsilon\alpha$.
Using $\Phi(-x) = 1 - \Phi(x)$ and $\Phi^{-1}(1-\alpha) = -\Phi^{-1}(\alpha)$:
\begin{equation}
    1 - f_\mu(\alpha)
    = 1 - \Phi\!\bigl(\Phi^{-1}(1-\alpha) - \mu\bigr)
    = \Phi\!\bigl(\Phi^{-1}(\alpha) + \mu\bigr).
\end{equation}
Setting $\mu = \Phi^{-1}(e^\varepsilon\alpha) - \Phi^{-1}(\alpha)$ gives $\Phi^{-1}(\alpha)+\mu = \Phi^{-1}(e^\varepsilon\alpha)$, so
$1 - f_\mu(\alpha) = e^\varepsilon\alpha = 1 - f_\varepsilon(\alpha)$.

\smallskip\noindent\textit{Case 2: $\alpha \geq 1/(e^\varepsilon+1)$.}
In this case, $f_\varepsilon(\alpha) = e^{-\varepsilon}(1-\alpha)$.
Directly from the definition of $f_\mu$:
\begin{equation}
    f_\mu(\alpha) = \Phi\!\bigl(\Phi^{-1}(1-\alpha) - \mu\bigr).
\end{equation}
Setting $\mu = \Phi^{-1}(1-\alpha) - \Phi^{-1}(e^{-\varepsilon}(1-\alpha))$ gives
$\Phi^{-1}(1-\alpha) - \mu = \Phi^{-1}(e^{-\varepsilon}(1-\alpha))$, so
$f_\mu(\alpha) = e^{-\varepsilon}(1-\alpha) = f_\varepsilon(\alpha)$.
\end{proof}

\begin{proof}[Proof of \cref{prop:mu-recall}]
The trade-off curve of $\mu$-GDP gives, for any test $\phi$,
$\fpr_\phi \geq \Phi(\Phi^{-1}(\tpr_\phi) - \mu)$. Therefore, for any $\phi$
with $\tpr_\phi \geq \bar\beta$,
\begin{equation}
    \frac{\tpr_\phi}{\fpr_\phi}
    \;\leq\; \frac{\tpr_\phi}{\Phi\!\bigl(\Phi^{-1}(\tpr_\phi) - \mu\bigr)}
    \;\leq\; \frac{\bar\beta}{\Phi\!\bigl(\Phi^{-1}(\bar\beta) - \mu\bigr)},
\end{equation}
where the second inequality uses that $t \mapsto t/\Phi(\Phi^{-1}(t) - \mu)$
is non-increasing on $(0, 1)$ for $\mu \geq 0$.
Setting the right-hand side equal to $e^\varepsilon$ gives
$\Phi(\Phi^{-1}(\bar\beta) - \mu) = e^{-\varepsilon}\bar\beta$, so
$\mu = \Phi^{-1}(\bar\beta) - \Phi^{-1}\!\bigl(e^{-\varepsilon}\bar\beta\bigr)$.
\end{proof}

\section{Mapping Gaussian DP $\mu$ to Risk}
\label{app:risk-vs-mu}

In the main body, we presented the maps from pure DP to the GDP parameter $\mu$ for each notion of risk. In this section, we present a mapping from $\mu$ to the values of risk.

\paragraph{Multiplicative advantage.}
\Cref{fig:app-multi-adv} shows the worst-case multiplicative advantage $\nicefrac{\tpr}{\fpr}$ for several $\mu$ values and FPR levels $\fpr = \alpha$.

\begin{figure}[h]
    \centering
    \begin{subfigure}[c]{0.55\linewidth}
        \centering
        \resizebox{0.9\linewidth}{!}{%
        \begin{tikzpicture}
            \begin{axis}[
                width=8cm, height=6cm,
                xlabel={TPR},
                ylabel={Multiplicative advantage $\nicefrac{\tpr}{\fpr}$},
                xmode=log, ymode=log,
                xmin=1e-6, xmax=1e-1, ymin=1, ymax=1e3,
                axis line style={draw opacity=0.5},
                enlargelimits=false,
                font={\sffamily},
                xtick={1e-6,1e-5,1e-4,1e-3,1e-2,1e-1},
                ytick={1,1e1,1e2,1e3},
                grid=major,
                grid style={gray!25, thin},
                legend style={draw=none, fill=none, font={\sffamily\footnotesize},
                              at={(0.97,0.97)}, anchor=north east,
                              row sep=-2pt, inner sep=2pt},
                legend cell align=left,
            ]
            \addplot[color=black!55, dashed, thick,
                     domain=1e-6:1e-1, samples=2]
                {1};
            \addlegendentry{TPR = FPR}
            \addplot[color=seqA, thick] table[x=fpr, y=cA] {data/plots/fig4a_advratio_vs_fpr.dat};
            \addlegendentry{$\mu = 0.1$}
            \addplot[color=seqB, thick] table[x=fpr, y=cB] {data/plots/fig4a_advratio_vs_fpr.dat};
            \addlegendentry{$\mu = 0.25$}
            \addplot[color=seqC, thick] table[x=fpr, y=cC] {data/plots/fig4a_advratio_vs_fpr.dat};
            \addlegendentry{$\mu = 0.5$}
            \addplot[color=seqD, thick] table[x=fpr, y=cD] {data/plots/fig4a_advratio_vs_fpr.dat};
            \addlegendentry{$\mu = 1$}
            \end{axis}
        \end{tikzpicture}}
        \caption{Multiplicative advantage vs.\ FPR}
        \label{fig:app-tradeoff-multi}
    \end{subfigure}%
    \hfill
    \begin{subfigure}[c]{0.42\linewidth}
        \centering
        {\sffamily\footnotesize
        \setlength{\tabcolsep}{4pt}
        \renewcommand{\arraystretch}{1.18}
        \begin{tabular}{@{}rcccccc@{}}
    & \multicolumn{6}{c}{$\mu$} \\
    \cmidrule(l){2-7}
    $\alpha$ & 0.1 & 0.25 & 0.5 & 1 & 1.5 & 2 \\
    \cmidrule(l){1-1} \cmidrule(l){2-7}
    $10^{-2}$ & 1.30 & 1.89 & 3.39 & 9.24 & 20.43 & 37.21 \\
    $10^{-3}$ & 1.39 & 2.25 & 4.80 & 18.30 & 55.89 & 137.81 \\
    $10^{-4}$ & 1.48 & 2.61 & 6.43 & 32.74 & 132.43 & 428.06 \\
    $10^{-5}$ & 1.56 & 2.97 & 8.33 & 54.75 & 284.71 & 1,175.97 \\
\end{tabular}
}
        \caption{Multiplicative advantage at FPR $\alpha$}
        \label{tab:app-mult-adv}
    \end{subfigure}
    \caption{Worst-case multiplicative advantage of $\mu$-GDP at FPR level $\alpha$, for several $\mu$ values.
    \textbf{\subref{fig:app-tradeoff-multi}}:~Worst-case multiplicative advantage $\nicefrac{\tpr}{\fpr}$ under $\mu$-GDP as a function of FPR $\alpha$.
    \textbf{\subref{tab:app-mult-adv}}:~Multiplicative advantage values $\nicefrac{\tpr}{\fpr}$ for selected $(\alpha, \mu)$ pairs.}
    \label{fig:app-multi-adv}
\end{figure}

\paragraph{Precision at fixed recall.}
\Cref{fig:app-prec} shows the worst-case precision at recall $\bar\beta$ under $\mu$-GDP, for several $\mu$ values and selected recall $\bar\beta$ levels.

\begin{figure}[h]
    \centering
    \begin{subfigure}[c]{0.55\linewidth}
        \centering
        \resizebox{0.9\linewidth}{!}{%
        \begin{tikzpicture}
            \begin{axis}[
                width=8cm, height=6cm,
                xlabel={TPR},
                ylabel={Precision},
                xmin=0, xmax=1, ymin=0.45, ymax=1.02,
                axis line style={draw opacity=0.5},
                enlargelimits=false,
                font={\sffamily},
                xtick={0,0.25,0.5,0.75,1},
                ytick={0.5,0.6,0.7,0.8,0.9,1},
                grid=major,
                grid style={gray!25, thin},
                legend style={draw=none, fill=none, font={\sffamily\footnotesize},
                              at={(0.97,0.97)}, anchor=north east,
                              row sep=-2pt, inner sep=2pt},
                legend cell align=left,
            ]
            \addplot[color=black!55, dashed, thick,
                     domain=0:1, samples=2]
                {0.5};
            \addlegendentry{Precision $= \nicefrac{1}{2}$}
            \addplot[color=seqA, thick] table[x=recall, y=cA] {data/plots/fig5a_precision_vs_recall.dat};
            \addlegendentry{$\mu = 0.1$}
            \addplot[color=seqB, thick] table[x=recall, y=cB] {data/plots/fig5a_precision_vs_recall.dat};
            \addlegendentry{$\mu = 0.25$}
            \addplot[color=seqC, thick] table[x=recall, y=cC] {data/plots/fig5a_precision_vs_recall.dat};
            \addlegendentry{$\mu = 0.5$}
            \addplot[color=seqD, thick] table[x=recall, y=cD] {data/plots/fig5a_precision_vs_recall.dat};
            \addlegendentry{$\mu = 1$}
            \end{axis}
        \end{tikzpicture}}
        \caption{Precision vs.\ recall}
        \label{fig:app-prec-curves}
    \end{subfigure}%
    \hfill
    \begin{subfigure}[c]{0.42\linewidth}
        \centering
        {\sffamily\footnotesize
        \setlength{\tabcolsep}{4pt}
        \renewcommand{\arraystretch}{1.18}
        \begin{tabular}{@{}rcccccc@{}}
    & \multicolumn{6}{c}{$\mu$} \\
    \cmidrule(l){2-7}
    $\bar\beta$ & 0.1 & 0.25 & 0.5 & 1 & 1.5 & 2 \\
    \cmidrule(l){1-1} \cmidrule(l){2-7}
    $10^{-2}$ & 0.567 & 0.667 & 0.809 & 0.958 & 0.994 & 0.999 \\
    $10^{-3}$ & 0.585 & 0.705 & 0.858 & 0.979 & 0.998 & 1.000 \\
    $10^{-4}$ & 0.599 & 0.735 & 0.891 & 0.988 & 0.999 & 1.000 \\
    $10^{-5}$ & 0.611 & 0.759 & 0.914 & 0.993 & 1.000 & 1.000 \\
\end{tabular}
}
        \caption{Precision at recall $\hat \beta$}
        \label{tab:app-prec}
    \end{subfigure}
    \caption{Precision at recall $\tpr = \bar\beta$ under $\mu$-GDP, for several $\mu$ values.
    \textbf{\subref{fig:app-prec-curves}}:~Worst-case precision as a function of recall $\bar\beta$.
    \textbf{\subref{tab:app-prec}}:~Precision values for selected $(\bar\beta, \mu)$ pairs.}
    \label{fig:app-prec}
\end{figure}

\paragraph{Approximate DP.}
\Cref{fig:app-adp} shows the ADP $\varepsilon$ for several $\mu$ values and $\delta$ values.

\begin{figure}[h]
    \centering
    \begin{subfigure}[c]{0.55\linewidth}
        \centering
        \resizebox{0.9\linewidth}{!}{%
        \begin{tikzpicture}
            \begin{axis}[
                width=8cm, height=6cm,
                xlabel={$\delta$},
                ylabel={$\varepsilon$},
                xmin=-15, xmax=0, ymin=0, ymax=9,
                restrict x to domain=-40:1,
                axis line style={draw opacity=0.5},
                enlargelimits=false,
                font={\sffamily},
                xtick={-15,-12,-9,-6,-3,0},
                xticklabels={$10^{-15}$,$10^{-12}$,$10^{-9}$,$10^{-6}$,$10^{-3}$,$1$},
                ytick={0,2,4,6,8},
                grid=major,
                grid style={gray!25, thin},
                legend style={draw=none, fill=none, font={\sffamily\footnotesize},
                              at={(rel axis cs:0.98,0.98)}, anchor=north east,
                              row sep=-2pt, inner sep=2pt},
                legend cell align=left,
            ]
            \addplot[color=seqA, thick] table[x=cA, y=eps] {data/plots/fig6a_log10delta_vs_eps.dat};
            \addlegendentry{$\mu = 0.1$}
            \addplot[color=seqB, thick] table[x=cB, y=eps] {data/plots/fig6a_log10delta_vs_eps.dat};
            \addlegendentry{$\mu = 0.25$}
            \addplot[color=seqC, thick] table[x=cC, y=eps] {data/plots/fig6a_log10delta_vs_eps.dat};
            \addlegendentry{$\mu = 0.5$}
            \addplot[color=seqD, thick] table[x=cD, y=eps] {data/plots/fig6a_log10delta_vs_eps.dat};
            \addlegendentry{$\mu = 1$}
            \end{axis}
        \end{tikzpicture}}
        \caption{Privacy profile, $\varepsilon(\delta)$}
        \label{fig:app-adp-curves}
    \end{subfigure}%
    \hfill
    \begin{subfigure}[c]{0.42\linewidth}
        \centering
        {\sffamily\footnotesize
        \setlength{\tabcolsep}{3pt}
        \renewcommand{\arraystretch}{1.18}
        \begin{tabular}{@{}rcccccc@{}}
    & \multicolumn{6}{c}{$\mu$} \\
    \cmidrule(l){2-7}
    $\delta$ & 0.1 & 0.25 & 0.5 & 1 & 1.5 & 2 \\
    \cmidrule(l){1-1} \cmidrule(l){2-7}
    $10^{-5}$ & 0.34 & 0.93 & 2 & 4.4 & 7.1 & 10 \\
    $10^{-6}$ & 0.4 & 1.1 & 2.3 & 4.9 & 7.8 & 11 \\
    $10^{-7}$ & 0.45 & 1.2 & 2.5 & 5.3 & 8.5 & 12 \\
    $10^{-8}$ & 0.49 & 1.3 & 2.7 & 5.8 & 9.1 & 13 \\
\end{tabular}
}
        \caption{ADP $\varepsilon$ at fixed $\delta$.}
        \label{tab:app-adp}
    \end{subfigure}
    \caption{Privacy profile of $\mu$-GDP for several $\mu$ values.
    \textbf{\subref{fig:app-adp-curves}}:~$\varepsilon$ as a function of $\delta$.
    \textbf{\subref{tab:app-adp}}:~$\varepsilon$ values such that $\mu$-GDP implies $(\varepsilon,\delta)$-DP, for selected $(\mu, \delta)$ pairs.}
    \label{fig:app-adp}
\end{figure}

\end{document}